\newtheorem{thm}{Theorem}
\newtheorem{theorem}[thm]{Theorem}
\newtheorem{remark}[thm]{Remark}
\newtheorem{proposition}[thm]{Proposition}
\newcommand{\energy}{\phi}
\def\<{\begin{equation}}
\def\>{\end{equation}}
\DeclareMathOperator*{\expectation}{\mathbb{E}}
\begin{document}

\title[on approximating $\nabla \MakeLowercase{f}$ with neural networks]{on approximating $\nabla f$\\with neural networks }

\author{saeed saremi}
\address{\rm NNAISENSE, UC BERKELEY}
\email{saeed@nnaisense.com}

\date{November 6, 2019}

\begin{abstract}
Consider a feedforward neural network $\psi: \mathbb{R}^d\rightarrow \mathbb{R}^d$ such that $\psi\approx \nabla f$, where $f:\mathbb{R}^d \rightarrow \mathbb{R}$ is a smooth function, therefore $\psi$ must satisfy $\partial_j \psi_i = \partial_i \psi_j$ pointwise. We prove a theorem that a $\psi$ network with more than one hidden layer can only represent one feature in its first hidden layer; this is a dramatic departure from the well-known results for one hidden layer. The proof of the theorem is straightforward, where two backward paths and a weight-tying matrix play the key roles.  We then present the alternative, the implicit parametrization, where the neural network is $\phi: \mathbb{R}^d \rightarrow \mathbb{R}$  and $\nabla \phi \approx \nabla f$; in addition, a ``soft analysis'' of $\nabla \phi$ gives a dual perspective on the theorem.  Throughout, we come back to recent  probabilistic models that are formulated as $\nabla \phi \approx \nabla f$, and conclude with a critique of denoising autoencoders.
\end{abstract}
\maketitle
There is a paragraph in~\citep{saremi2018deep}, titled ``failures of estimating the score function directly''\textemdash they experimented and reported the failures of training a neural network $\psi:\mathbb{R}^d \rightarrow \mathbb{R}^d$ to approximate  $\nabla f$, where $f:\mathbb{R}^d \rightarrow \mathbb{R}$ was the log probability density function, $\nabla f$ the score function, and the learning objective was the denoising score matching~\citep{hyvarinen2005estimation,vincent2011connection}. The problem is in fact general, and the question is, if a (deep) neural network has the capacity to approximate $\nabla f$ for a general smooth $f$. We expect the capacity of $\psi$ to be limited, because the neural network must satisfy
$$ \partial_j \psi_i(x,\theta) = \partial_i \psi_j(x,\theta)$$
pointwise in $\mathbb{R}^d$, and these equations put constraints on the parameters $\theta \in \mathbb{R}^p$. However, what these constraints  are and how limiting they are have never been explored before beyond a single hidden layer~\citep{vincent2011connection}. Note that $\partial_j \psi_i=\partial_i \psi_j$ is {given}, a direct consequence of $f$ being a smooth function, and the fact that $\psi \approx \nabla f$. 

{We prove that to satisfy $\partial_j \psi_i=\partial_i \psi_j$ for a neural network with more than one hidden layer (i.e. with depth $L>2$), the input weights $\theta_m^{(0)} \in \mathbb{R}^d$ and the output weights $\theta_n^{(L-1)} \in \mathbb{R}^d$ must be all parallel to each other. Therefore, in \emph{explicitly} parameterizing  $$\nabla f:\mathbb{R}^d \rightarrow \mathbb{R}^d,$$ the neural network is required to represent only one feature in its first hidden layer. This surprising result is encapsulated in Theorem~\ref{thm:onelegged}, which we build up to starting from one hidden layer. We then present the alternative; it is the \emph{implicit} parametrization,  where the neural network is $$\phi: \mathbb{R}^d \rightarrow \mathbb{R},$$ and $\nabla \phi \approx \nabla f$. 

We discuss this implicit parametrization solution in some detail with an example from~\citep{saremi2019neural}, where we outline how to set up the learning problem and how to proceed with the optimization. We then give a different perspective on Theorem~\ref{thm:onelegged} with a ``soft analysis'' of $\nabla \phi$, where we show that $\nabla \phi$ does not map to a (conventional) feedforward neural network if $\phi$ has more than one hidden layer. We conclude the paper with a critique of denoising autoencoders, written as $\psi: \mathbb{R}^d \rightarrow \mathbb{R}^d$, in the limit where they approximate the score function~\citep{alain2014regularized}.}

{\bf One hidden layer.} \label{sec:score1} Consider $\psi$ with one hidden layer and a linear readout:
\begin{equation} \label{eq:psi1} \psi_i(x) = \sum \theta^{(1)}_{im} \sigma(\langle \theta^{(0)}_m,x\rangle), \end{equation}
where  $\sigma: \mathbb{R} \rightarrow \mathbb{R}$ is the activation function, $\theta^{(1)}_{im}\in\mathbb{R},~\theta^{(0)}_{m} \in \mathbb{R}^d$, $m$ is the index for neurons in the hidden layer, and $\sum$ denotes the sum over the {repeated} indices. The biases are dropped throughout the paper for the ease of notation; they do not affect the results as we are only concerned with the derivatives with respect to the inputs, not the parameters. It follows,
$$ \partial_j \psi_i(x) =  \sum \theta^{(1)}_{im} \theta^{(0)}_{mj}~\sigma'(\langle \theta^{(0)}_m,x\rangle). $$
To satisfy the constraints $\partial_j \psi_i = \partial_i \psi_j$ pointwise, the following must hold indexwise, \begin{equation} \label{eq:constr1} \theta^{(1)}_{im} \theta^{(0)}_{mj}=\theta^{(1)}_{jm} \theta^{(0)}_{mi}.\end{equation}
\begin{remark} \label{remark:relu} \rm
	Strictly speaking, the indexwise condition is not a necessary condition if $\sigma'$ does not take an infinite number of values in $\mathbb{R}$. The ReLU is an example of such a function. But for any such ``weird" functions, there exists a sequence of smooth functions as limits. An example is	$$ \sigma_\beta(x) = x/(1+e^{-\beta x}),$$
	where ReLU is the  uniform limit as $\beta \rightarrow \infty$. In this particular example, the condition in Eq.~\ref{eq:constr1} is in fact a  necessary condition for any $\beta < \infty$ (see the remark below). In this work, we make the mild assumption that either $\sigma'$ takes an infinite number of distinct values in $\mathbb{R}$ or there exists a sequence such that it converges to $\sigma'$ and every function in the sequence does. 
	\end{remark}	
	
\begin{remark} \label{remark:lebesgue} \rm
In addition, in stating the necessary condition, we followed the ``Lebesgue philosophy''~\citep{tao2010epsilon} and ignored the set of measure zero where $\partial_j\psi_i=\partial_i\psi_j$ can hold. This set is constructed by $\theta^{(0)}_m=\theta^{(0)}_{m'}$ (for all $m$ and $m'$) together with $$\sum (\theta^{(1)}_{jm} \theta^{(0)}_{mi}-\theta^{(1)}_{jm} \theta^{(0)}_{mi})=0.$$
\end{remark}

One can visualize $\theta^{(1)}_{im} \theta^{(0)}_{mj}$ as the multiplication of weights on the backward path $(i,m,j)$ taken from unit $i$ in the output layer to the unit $j$ in the input layer passing through $m$ in the hidden layer. Eq.~\ref{eq:constr1} is then seen as such: the weights multiplied on the backward paths $(i,m,j)$ and $(j,m,i)$ must match. The constraints are satisfied with a one-dimensional array $S$ with the size of the hidden layer: 
\<
\theta^{(1)}_{m} = S_m \theta^{(0)}_{m},
\>
where $S_m\in \mathbb{R}$, $\theta^{(1)}_{m} \in \mathbb{R}^d$ is the readout weights from unit $m$, and $\theta^{(0)}_{m} \in \mathbb{R}^d$ is the input weight to the same unit.  In other words, for each neuron $m$ in the hidden layer, the readout weight $\theta^{(1)}_{m}$ is  tied to the input weight  $\theta^{(0)}_{m}$ with the scaling factor $S_m$.  The $S$ itself is  free to change, and crucially $\theta^{(0)}_{m}$ themselves are also free (the latter will change dramatically for two and more hidden layers).  
\begin{remark} \rm
	The above tied weights phenomenon for the one hidden layer $\psi$ network seems to be ``universal'' and well known in the literature, where they ``pop up'' in a lot of places, from restricted Boltzmann machines to denoising autoencoders. See~\citep{vincent2011connection} for another derivation and further discussion. 
\end{remark}


{\bf Two hidden layers.} \label{sec:score2} Next, we present our key contribution on the low expressivity of  $\psi$ networks with two hidden layers, where a qualitative change emerges compared to the one hidden layer. The proof in Proposition~\ref{prop:twohidden} is the key to proving Theorem~\ref{thm:onelegged}. 

\begin{proposition}[A $\psi$ network with two hidden layers can only represent one feature in its first hidden layer] \label{prop:twohidden} Consider $\psi$  with two hidden layers: \begin{equation} \label{eq:psi2} \psi_i(x) = \sum \theta^{(2)}_{in}\sigma(\theta^{(1)}_{nm} \sigma(\langle \theta^{(0)}_m,x\rangle)), \end{equation}
where $\theta^{(2)}_{in},\theta^{(1)}_{nm}\in\mathbb{R}, \theta^{(0)}_{m} \in \mathbb{R}^d$, and $\sum$ denotes the sum over the repeated indices. Then the constrains $\partial_j\psi_i=\partial_i\psi_j$ are satisfied only if (the trivial case, $\theta_n^{(2)}=0$ for all $n$, aside) the input weights $\theta^{(0)}_{m} \in \mathbb{R}^d$ and the output weights $\theta^{(2)}_{n} \in \mathbb{R}^d$ are all parallel to each other.	
	\begin{proof}[\bf Proof]
We first compute $\partial_j \psi_i(x)$. It follows,
\< \label{eq:backprop} \partial_j \psi_i(x) =  \sum \theta^{(2)}_{in}\theta^{(1)}_{nm} \theta^{(0)}_{mj} \sigma'(\theta^{(1)}_{nm} \sigma(\langle \theta^{(0)}_m,x\rangle)) \sigma'(\langle \theta^{(0)}_m,x\rangle).\>
To satisfy $\partial_j \psi_i = \partial_i \psi_j$ pointwise for all $x$ in $\mathbb{R}^d$, the parameters $\theta$ must satisfy the following identity indexwise (see Remark~\ref{remark:relu} and~\ref{remark:lebesgue}):
\< \theta^{(2)}_{in}\theta^{(1)}_{nm} \theta^{(0)}_{mj} = \theta^{(2)}_{jn}\theta^{(1)}_{nm} \theta^{(0)}_{mi}.\>
Similar to the case of one hidden layer, we visualize $\theta^{(2)}_{in}\theta^{(1)}_{nm} \theta^{(0)}_{mj}$ as the multiplication of weights on the backward path taken from unit $i$ in the output layer to the unit $j$ in the input layer $(i,n,m,j)$. The weights multiplied on the two paths $(i,n,m,j)$ and $(j,n,m,i)$ must therefore match. Note that $(n,m)$ is common between the two paths, and the constraint is simplified to
\<\label{eq:inmj} \theta^{(2)}_{in} \theta^{(0)}_{mj} = \theta^{(2)}_{jn} \theta^{(0)}_{mi}.\>
The condition is therefore reduced to the one layer case with the important exception that $S$ is transformed to a weight-tying {\it matrix} connecting the last and first hidden layers:
\< \label{eq:key}
\theta^{(2)}_{n} = S_{nm} \theta^{(0)}_{m}.\\
\>
 The identity must hold indexwise.  The problem is that to satisfy the above constraints for all indices the input weights themselves become tied. This is easy to see by writing the indexwise constraint Eq.~\ref{eq:key} for $m$ and $m'\neq m$, which results in
\< S_{nm'} \theta^{(0)}_{m'}= S_{nm} \theta^{(0)}_{m}.\>
Input weights $\theta^{(0)}_{m}$ must therefore be all parallel to each other. In addition, from Eq.~\ref{eq:key}, the output weights $\theta^{(2)}_{n}$ must point in that same direction. This is the claim in the proposition. Obviously, there is an alternative  by setting $S_{nm}=0$ for all $n$, but that corresponds to $\theta_n^{(2)}=0$ for all $n$, which is the trivial case.
\end{proof}
\end{proposition}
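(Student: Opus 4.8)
The plan is to follow the one-hidden-layer template of Eq.~\ref{eq:constr1} while carefully tracking the extra factor that the second layer contributes. First I would differentiate $\psi_i$ by the chain rule, peeling off the two nested activations. Writing $g_{nm}(x)=\sigma'\!\big(\theta^{(1)}_{nm}\sigma(\langle\theta^{(0)}_m,x\rangle)\big)\,\sigma'(\langle\theta^{(0)}_m,x\rangle)$ for the $x$-dependent part attached to each neuron pair $(n,m)$, the derivative takes the form $\partial_j\psi_i(x)=\sum \theta^{(2)}_{in}\theta^{(1)}_{nm}\theta^{(0)}_{mj}\,g_{nm}(x)$, where the scalar $g_{nm}$ depends on neither the output index $i$ nor the input coordinate $j$. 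This isolates the entire parameter dependence into the coefficient $\theta^{(2)}_{in}\theta^{(1)}_{nm}\theta^{(0)}_{mj}$.

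The crux is to turn the pointwise identity $\partial_j\psi_i=\partial_i\psi_j$ into an algebraic condition on the weights. I would argue that the coefficients must match term by term: invoking the standing assumption on $\sigma'$ (Remark~\ref{remark:relu}) together with the Lebesgue convention of Remark~\ref{remark:lebesgue}, away from the measure-zero set on which the input directions coincide the functions $g_{nm}$ are independent enough that $\sum(\theta^{(2)}_{in}\theta^{(1)}_{nm}\theta^{(0)}_{mj}-\theta^{(2)}_{jn}\theta^{(1)}_{nm}\theta^{(0)}_{mi})\,g_{nm}(x)\equiv 0$ forces every coefficient to vanish. This yields the indexwise identity $\theta^{(2)}_{in}\theta^{(1)}_{nm}\theta^{(0)}_{mj}=\theta^{(2)}_{jn}\theta^{(1)}_{nm}\theta^{(0)}_{mi}$; since $\theta^{(1)}_{nm}$ is common to both sides, I would cancel it, reducing the problem to the single-layer form $\theta^{(2)}_{in}\theta^{(0)}_{mj}=\theta^{(2)}_{jn}\theta^{(0)}_{mi}$.

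From here the argument is purely combinatorial. I would read this last relation as a weight-tying condition $\theta^{(2)}_n=S_{nm}\theta^{(0)}_m$, now mediated by a \emph{matrix} $S_{nm}$ rather than the vector $S_m$ of the one-layer case, and exploit that the left-hand side $\theta^{(2)}_n$ is the same for every $m$. Equating the relation for two distinct neurons $m\neq m'$ gives $S_{nm}\theta^{(0)}_m=S_{nm'}\theta^{(0)}_{m'}$, which forces all input weights $\theta^{(0)}_m$ to be collinear; substituting back forces each output weight $\theta^{(2)}_n$ to lie along the same line. The degenerate branch $S_{nm}=0$ for all $m$ collapses to $\theta^{(2)}_n=0$, the excluded trivial case.

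I expect the only genuinely delicate point to be the second step, the passage from the pointwise (functional) identity to the indexwise (algebraic) one; the rest is bookkeeping with the chain rule and a two-line collinearity argument. The subtlety is that coefficient-matching can fail precisely on the exceptional set where several $\theta^{(0)}_m$ already align\textemdash but that is the measure-zero configuration flagged in Remark~\ref{remark:lebesgue}, and it is, harmlessly, exactly the collinear conclusion the proposition asserts, so discarding it costs nothing.
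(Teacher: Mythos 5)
Your proposal is correct and follows essentially the same route as the paper: compute $\partial_j\psi_i$ by the chain rule, pass from the pointwise identity to the indexwise coefficient identity via the assumptions in Remarks~\ref{remark:relu} and~\ref{remark:lebesgue}, cancel the common factor $\theta^{(1)}_{nm}$, encode the reduced constraint as the weight-tying matrix $\theta^{(2)}_n=S_{nm}\theta^{(0)}_m$, and compare $m\neq m'$ to force collinearity. Your closing observation\textemdash that the exceptional configuration on which coefficient-matching could fail is itself the collinear one, so discarding it is harmless\textemdash is a nice explicit gloss on what the paper leaves to Remark~\ref{remark:lebesgue}, but it is not a different argument.
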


{\bf More than two hidden layers.} Next, we prove the key theorem. 
\begin{proposition} \label{prop:easy} Consider a $\psi$ network with arbitrary depth $L$: $$ \psi_i(x) = \sum \theta^{(L-1)}_{in}\sigma(\theta^{(L-2)}_{n\beta}\sigma(\dots \sigma(\theta^{(1)}_{\alpha m} \sigma(\langle \theta^{(0)}_m,x\rangle))\cdots)),$$
where the first hidden layer is indexed by $m$, the last one by $n$, and the rest by $\alpha$, $\beta$, etc. Then to satisfy the constraints $\partial_j\psi_i =\partial_i\psi_j$ pointwise, the multiplication of the weights on the two backward paths $(i,n,\beta,\dots,\alpha,m,j)$ and $(j,n,\beta,\dots,\alpha,m,i)$ must be equal indexwise.
\end{proposition}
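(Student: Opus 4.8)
The plan is to differentiate $\psi_i$ directly by the chain rule and to read off the same backward-path structure that appeared in the two-hidden-layer case, only now along a longer path. Writing $z^{(\ell)}$ for the pre-activation at layer $\ell$ (so that $z^{(0)}_m=\langle\theta^{(0)}_m,x\rangle$, and each subsequent $z^{(\ell)}$ is obtained from $\sigma(z^{(\ell-1)})$ through the weight matrix $\theta^{(\ell)}$), repeated application of the chain rule expresses $\partial_j\psi_i$ as a sum over all tuples of intermediate indices $(n,\beta,\dots,\alpha,m)$ of the product of the weights along the backward path $(i,n,\beta,\dots,\alpha,m,j)$ times a product of $\sigma'$ factors, one per hidden layer, each evaluated at the corresponding pre-activation. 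Schematically, $\partial_j\psi_i=\sum W_{(i,n,\beta,\dots,\alpha,m,j)}\,\Sigma_{(n,\beta,\dots,\alpha,m)}(x)$, where $W_{(i,n,\beta,\dots,\alpha,m,j)}=\theta^{(L-1)}_{in}\theta^{(L-2)}_{n\beta}\cdots\theta^{(1)}_{\alpha m}\theta^{(0)}_{mj}$ is the weight product on the path and $\Sigma_{(n,\beta,\dots,\alpha,m)}(x)$ collects the $\sigma'$ terms.

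The key observation, which does all the work, is that the factor $\Sigma_{(n,\beta,\dots,\alpha,m)}(x)$ depends only on $x$ and on the \emph{intermediate} indices; it is completely independent of the two endpoint indices $i$ (output) and $j$ (input derivative), because the pre-activations feeding the $\sigma'$ terms are computed after the first linear map and before the final linear readout. Consequently, forming $\partial_i\psi_j$ merely swaps $i\leftrightarrow j$ in the weight product while leaving $\Sigma_{(n,\beta,\dots,\alpha,m)}(x)$ untouched. The constraint $\partial_j\psi_i=\partial_i\psi_j$ therefore reads, for every $x$, $\sum\bigl(W_{(i,n,\beta,\dots,\alpha,m,j)}-W_{(j,n,\beta,\dots,\alpha,m,i)}\bigr)\Sigma_{(n,\beta,\dots,\alpha,m)}(x)=0$; that is, a single shared family of $x$-dependent functions multiplies the antisymmetrized weight products.

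From here I would conclude indexwise, exactly as in the one- and two-hidden-layer cases: appealing to the functional independence of the $\Sigma$ factors (under the mild assumption of Remark~\ref{remark:relu}, and up to the measure-zero exceptions noted in Remark~\ref{remark:lebesgue}), the coefficient of each $\Sigma_{(n,\beta,\dots,\alpha,m)}(x)$ must vanish, which is precisely the asserted equality $W_{(i,n,\beta,\dots,\alpha,m,j)}=W_{(j,n,\beta,\dots,\alpha,m,i)}$ of the two backward-path products. Specializing to $L=3$ and cancelling the shared inner factor $\theta^{(1)}_{nm}$ recovers the two-hidden-layer constraint Eq.~\ref{eq:inmj} of Proposition~\ref{prop:twohidden}, confirming that the present statement is the clean generalization of that computation.

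The main obstacle is the final indexwise step, not the differentiation. For one hidden layer the functions $\sigma'(\langle\theta^{(0)}_m,x\rangle)$ are manifestly independent across $m$, whereas for deep networks the $\Sigma_{(n,\beta,\dots,\alpha,m)}(x)$ are products of $\sigma'$ values evaluated at pre-activations that share sub-paths and are themselves nonlinear functions of $x$, so their functional independence is genuinely more delicate and is exactly what the assumptions in Remarks~\ref{remark:relu} and~\ref{remark:lebesgue} are designed to license. Once that independence is granted, the chain-rule bookkeeping together with the endpoint-independence of $\Sigma$ makes the remainder routine.
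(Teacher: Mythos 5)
Your proposal is correct and follows exactly the route the paper intends: the paper's own ``proof'' is a one-line deferral to the chain rule and to the first step of Proposition~\ref{prop:twohidden}, and you have simply written out that chain-rule computation, correctly observing that the $\sigma'$ factors depend only on the intermediate indices so that $i\leftrightarrow j$ swaps only the weight products. You also rightly flag that the only nontrivial step is the functional independence of the $\Sigma_{(n,\beta,\dots,\alpha,m)}$ factors needed to pass to the indexwise condition, which is precisely the point the paper itself handles only informally via Remarks~\ref{remark:relu} and~\ref{remark:lebesgue}.
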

\begin{proof}[\bf Proof]
	We leave the proof to the reader; it comes from the chain rule and by inspecting the first step in the proof of Proposition~\ref{prop:twohidden}.
	\end{proof}

\begin{theorem}[A $\psi$ network with more than one hidden layer can only represent one feature in its first hidden layer] \label{thm:onelegged} \label{thorem:notexpressive} Consider $\psi$ with depth $L>2$:
\begin{equation} \nonumber \psi_i(x) = \sum \theta^{(L-1)}_{in}\sigma(\theta^{(L-2)}_{n\beta}\sigma(\dots \sigma(\theta^{(1)}_{\alpha m} \sigma(\langle \theta^{(0)}_m,x\rangle))\cdots)). \end{equation} Then the constrains $\partial_j\psi_i=\partial_i\psi_j$ are satisfied only if (the trivial case, $\theta_n^{(L-1)}=0$ for all $n$, aside) the input weights $\theta^{(0)}_{m} \in \mathbb{R}^d$ and the output weights $\theta^{(L-1)}_{n} \in \mathbb{R}^d$ are all parallel to each other.
 
\begin{proof}[\bf Proof]
	Start with the Proposition~\ref{prop:easy} and observe that the two backward paths $(i,n,\beta,\dots,\alpha,m,j)$ and $(j,n,\beta,\dots,\alpha,m,i)$ are the same except for the beginning and the end.  The necessary condition simplifies to the condition below that must hold indexwise:	$$ \theta^{(L-1)}_{in} \theta^{(0)}_{mj} = \theta^{(L-1)}_{jn} \theta^{(0)}_{mi}.$$
The rest follows (the steps after Eq.~\ref{eq:inmj}) from the proof in Proposition~\ref{prop:twohidden}.
\end{proof}	
\end{theorem}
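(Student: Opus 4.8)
The plan is to reduce the depth-$L$ case to the two-hidden-layer constraint already settled in Proposition~\ref{prop:twohidden} by exploiting the path structure of the chain rule, and then reuse the weight-tying argument essentially verbatim. I would begin by invoking Proposition~\ref{prop:easy}, which hands me the indexwise equality of the weight products along the two mirror backward paths $(i,n,\beta,\dots,\alpha,m,j)$ and $(j,n,\beta,\dots,\alpha,m,i)$. This is the step that secretly carries all of the analytic content: it is where the pointwise identity $\partial_j\psi_i=\partial_i\psi_j$ for all $x\in\mathbb{R}^d$ is converted into a statement about the weights alone, using the functional-independence hypothesis on $\sigma'$ from Remark~\ref{remark:relu} to peel off the $x$-dependent products of activation derivatives attached to each path.

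The key structural observation is that the two paths traverse \emph{the same} interior vertices $n,\beta,\dots,\alpha,m$ in the same order, and differ only in which endpoint ($i$ versus $j$) attaches to the last-layer unit $n$ and which attaches to the input coordinate $m$. Writing the shared interior product as $P:=\theta^{(L-2)}_{n\beta}\cdots\theta^{(1)}_{\alpha m}$, the two weight products read $\theta^{(L-1)}_{in}\,P\,\theta^{(0)}_{mj}$ and $\theta^{(L-1)}_{jn}\,P\,\theta^{(0)}_{mi}$. Cancelling the common factor $P$ collapses the constraint to
$$\theta^{(L-1)}_{in}\theta^{(0)}_{mj}=\theta^{(L-1)}_{jn}\theta^{(0)}_{mi},$$
which is precisely Eq.~\ref{eq:inmj} with $\theta^{(2)}$ replaced by the output weights $\theta^{(L-1)}$. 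From here I would simply transcribe the endgame of Proposition~\ref{prop:twohidden}: introduce the weight-tying matrix through $\theta^{(L-1)}_n=S_{nm}\theta^{(0)}_m$, write this relation for two distinct first-layer indices $m$ and $m'\neq m$ to obtain $S_{nm'}\theta^{(0)}_{m'}=S_{nm}\theta^{(0)}_m$, and conclude that all input weights $\theta^{(0)}_m$ are parallel, with the output weights $\theta^{(L-1)}_n$ then forced into that same direction; the escape hatch $S_{nm}=0$ for all $n,m$ is exactly the trivial case $\theta^{(L-1)}_n=0$.

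The step I expect to be the real obstacle is the cancellation of the interior product $P$: it is only legitimate when $P\neq0$, so a careful proof must argue that index tuples with $P=0$ make the path-equality hold vacuously and hence constrain nothing, while the surviving nonvanishing tuples still suffice to pin a common direction onto \emph{every} $\theta^{(0)}_m$ and $\theta^{(L-1)}_n$ not already annihilated by a zero weight (which once more lands in the trivial case). This bookkeeping over which $(n,m)$ pairs are joined by a nonzero interior path is the one place where the clean ``divide out the middle'' narrative demands a genuine argument rather than a purely formal cancellation; everything else is inherited unchanged from the two-layer analysis.
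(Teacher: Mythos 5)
Your proof follows the paper's argument exactly: invoke Proposition~\ref{prop:easy}, cancel the shared interior weight product to reduce the path condition to $\theta^{(L-1)}_{in}\theta^{(0)}_{mj}=\theta^{(L-1)}_{jn}\theta^{(0)}_{mi}$, and rerun the weight-tying endgame of Proposition~\ref{prop:twohidden}. The one place you go beyond the paper is in flagging that dividing out the interior product $P$ is only formal when $P\neq 0$; the paper performs the same cancellation silently (already at Eq.~\ref{eq:inmj}, where $\theta^{(1)}_{nm}$ is dropped), so your caveat identifies a bookkeeping step the published proof leaves implicit rather than a genuinely different route.
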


{\bf The implicit parametrization of $\nabla f$.}  We now present the alternative, since we just learned that $\psi$ with more than one hidden layer is required to represent only one feature in its first hidden layer. The alternative is simple in fact: we parametrize $\phi: \mathbb{R}^d \rightarrow \mathbb{R}$ such that $\nabla \phi \approx \nabla f$. In this (implicit) parametrization, the learning objective $\mathcal{L}(\theta)=\mathcal{L}[\nabla \phi(\cdot,\theta)]$ is constructed as the function of the parameters $\theta$ of the $\phi$ network and minimized, where at its global minimum, $\nabla \phi \approx \nabla f$. However, the optimization is not straightforward. Let us consider an example to ground the ideas. The example is the most recent one, studied in~\citep{saremi2019neural}. In {\it neural empirical Bayes}, the learning objective $\mathcal{L}[\psi]$ as the functional of $\psi$ is given by
$$ \mathcal{L}[\psi]=\expectation \Vert X-Y-\sigma^2 \psi(Y) \Vert^2,$$
where $X$ is the ``clean'' random variable which takes values in $\mathbb{R}^d$, $Y$ is the smoothed $X$ that is obtained by corrupting samples from $X$ with the additive  Gaussian noise $N(0,\sigma^2 I_d)$, and the expectation $\mathbb{E}$ is over $X$ and $Y$. In this setup, at optimality, $\psi$ approximates $\nabla f$, where $f$ is the  log probability density function in the noisy space $Y$. We now switch to the $\phi$ network (it is {exactly} the ``deep energy estimator network''~\citep{saremi2018deep} with a negative sign). The learning objective expressed for the $\phi$ network as the function of the parameters $\theta$ is given by
$$ \mathcal{L}(\theta) = \expectation \Vert X-Y-\sigma^2\nabla \phi(Y,\theta) \Vert^2.$$
At optimality $\theta^*$, $\nabla \phi(\cdot,\theta^*) \approx \nabla f(\cdot)$.

This is an {implicit} parametrization of $\nabla f$, ``implicit'' because (in practice) the symbolic form of $\nabla \phi$ is not known~\citep{griewank2003mathematical} (next section is devoted to a ``soft analysis'' of this symbolic form for $L=3$). The optimization comes with a cost in the form of ``double backpropagation''~\citep{saremi2019neural} (a slight abuse of terminology since both $\nabla \phi$ and  $ \nabla \mathcal{L}$ are computed with  algorithmic/automatic differentiation, with techniques that are more general than \emph{the} backpropagation~\citep{rumelhart1986learning}, see~\citep{baydin2018automatic} for a survey and~\citep{griewank2012invented, schmidhuber2015deep} for a fascinating history of automatic differentiation and backpropagation). Essentially, $\phi$ must be differentiated first with respect to the inputs to compute the loss before each SGD update $$\theta \leftarrow \theta - \epsilon \nabla \mathcal{L}.$$ This appears computationally expensive, but the first differentiation is in the input space with $d\ll p$, and the stochastic gradient descent has been effectively used in optimizing $\mathcal{L}(\theta)$ in wide and deep {convolutional} neural network architectures presented in~\citep{saremi2019neural} and in {residual} network architectures~\citep{li2019annealed}.

{\bf $\nabla \phi \neq$ feedforward neural network.} Thereom~\ref{thm:onelegged} requires reflections and further analysis. This section is devoted to that. We start with an analysis of the symbolic form of $\nabla \phi$ for $L=2$, which corresponds to a feedforward neural network with tied weights; this maps to the results for the one hidden layer $\psi$ that we started this paper with. This calculation for $L=2$ is close to the analysis in~\citep{vincent2011connection} for denoising autoencoders with one hidden layer. We then study $\nabla \phi$ for $L=3$ and make observations about the fact that $\nabla \phi$ for $L>2$ does not map to a  (conventional) feedforward neural network, where a novel form of feedforward computation, dubbed as ``products of neural networks'', emerge. 

The analysis is as follows,
\begin{itemize}
	\item {\it One hidden layer.} Consider $\phi$ with one hidden layer,\begin{equation} \label{eq:phi1} \nonumber \energy(x) = \sum \theta^{(1)}_{m} \sigma(\langle \theta^{(0)}_m,x\rangle),\end{equation} 
where $\theta^{(1)}_m\in \mathbb{R},~\theta^{(0)}_{m} \in \mathbb{R}^d$. It follows,
\begin{equation}\label{eq:phi2psi}
	\psi_i(x) = \sum \theta^{(1)}_{m} \theta^{(0)}_{mi} \sigma'(\langle \theta^{(0)}_m,x\rangle).
\end{equation} 
This symbolic form corresponds to a feedforward neural network (a $\psi$ network) itself that is derived with the following mapping (compare Eq.~\ref{eq:psi1} with Eq.~\ref{eq:phi2psi}),
\begin{equation}\label{eq:map}
	\theta_{im}^{(1)} \leftarrow \theta^{(1)}_{m} \theta^{(0)}_{mi},~\theta_{m}^{(0)} \leftarrow \theta_{m}^{(0)},~\sigma \leftarrow \sigma'. 
\end{equation} 
This is consistent with our analysis of the $\psi$ network from the perspective of $\partial_j \psi_i = \partial_i \psi_j$ that we started this paper with: the output weights are tied to the input weights with the array $S_m = \theta_m^{(1)}$ that is given here in terms of the output weights $\theta_m^{(1)}$ of the $\phi$ network. In summary, $\psi$ with one hidden layer  can be written as the gradient of a corresponding $\phi$ with one hidden layer. We can also check independently that $\partial_j \psi_i=\partial_i \psi_j$ holds:$$ \partial_j \psi_i(x) = \sum \theta^{(1)}_{m} \theta^{(0)}_{mi} \theta^{(0)}_{mj} \sigma''(\langle \theta^{(0)}_m,x\rangle). $$
\item {\it Two (and more) hidden layers.} Now consider $\phi$ with two hidden layers,
$$\energy(x) = \sum \theta^{(2)}_{n}\sigma(\theta^{(1)}_{nm} \sigma(\langle \theta^{(0)}_m,x\rangle),$$
where $\theta^{(2)}_{n}, \theta^{(1)}_{nm} \in \mathbb{R},~\theta^{(0)}_m \in \mathbb{R}^d$. It follows, \begin{equation}\label{eq:score2layer}
  \psi_i(x) = \sum \theta^{(2)}_{n} \theta^{(1)}_{nm} \theta^{(0)}_{mi}\sigma'(\theta^{(1)}_{nm} \sigma(\langle \theta^{(0)}_m,x\rangle) \sigma'(\langle \theta^{(0)}_m,x\rangle).	
 \end{equation}
Note that $\langle \theta^{(0)}_m,x\rangle$ appears in two places and they are multiplied.   {\it Not a (conventional) feedforward computation!} It is easy to check that $\nabla \phi$ derived from a $\phi$ network with depth $L$ performs computations that correspond to  products of $L-1$ networks. 

Of course, this ``soft analysis'' of $\nabla \phi$ is not a proof that approximating $\nabla f$ cannot be achieved  by a ``conventional'' feedforward neural network of depth $L>2$ (Theorem~\ref{thm:onelegged} is such a proof), but it gives a different perspective on the theorem. We leave further analysis of this products of neural networks (and possible connections to {\it products of experts}~\citep{hinton1999products}) to future research.
\end{itemize}

{\bf Denoising autoencoders.} Next, we present a critique of denoising autoencoders  from the perspective of the results presented here.  For denoising autoencoders,$$\psi:\mathbb{R}^d \rightarrow \mathbb{R}^d$$ is the composition of an encoder and a decoder, and the neural network is optimized to reconstruct the clean samples $X$ from their Gaussian-corrupted versions $Y$. The learning objective is given by:
$$ \mathcal{L}[\psi] = \expectation \Vert X-\psi(Y)\Vert^2.$$
It was proven in~\citep{alain2014regularized} that at optimality, in the limit $\sigma\rightarrow 0$ (scaling/shifting factors aside), $\psi \approx \nabla f$ (see Theorem 1 in the reference). This was an important development as it generalized (at the cost of $\sigma\rightarrow 0$) the intriguing connection between score matching and denoising autoencoders that had been established in~\citep{vincent2011connection}. However, the question is:

\begin{quotation}
Does the  denoising autoencoder $\psi$ have the capacity to approximate $\nabla f$ for a general smooth function $f$?
\end{quotation} 

The general formulation of denoising autoencoders~\citep{alain2014regularized} did indeed generalize the connection between the denoising autoencoders and score matching~\citep{vincent2011connection}, but the problem is that a single hidden layer $\psi$ (with tied output weights) can be expressed as the gradient of an energy function (a $\phi$ network in our terminology), but as we showed here a deep(er) feedforward $\psi$ cannot be. There is a qualitative change (a ``phase transition'' of sort) that emerges in going from one hidden layer to two and more hidden layers. The concerns over $$\partial_j\psi_i=\partial_i\psi_j$$ and the impact of the constraints on the parameters of the denoising autoencoders was in fact mentioned in~\citep{alain2014regularized} in a section titled ``limited parametrization'', but exactly {\it how limited} those parameterizations are had remained an open problem. Theorem~\ref{thm:onelegged} is the surprising answer to that question.

{\bf Discussion.} We finish with a summary and the larger picture.\begin{itemize}
 \item[(i)]	The {\it depth} of neural networks is known to be a necessity for ``expressive power''~\citep{bengio2009learning, lu2017expressive}. Theorem~\ref{thm:onelegged} therefore makes a clear statement on the low expressivity of $\psi$ networks for the problem of approximating $\nabla f$, where we are left with two choices: (1) $\psi$ with one hidden layer and tied output weights (2) $\psi$ with only one feature represented in the first hidden layer.
 \item[(ii)] We presented $\phi$ as the implicit parametrization machinery and the only alternative for solving the problem of approximating $\nabla f$ with deep (expressive) neural networks. The framework presented here is general, but the recent examples we revisited here include ``deep energy estimator networks'', studied in the context of unnormalized probabilistic models~\citep{saremi2018deep,saremi2019neural}.
 \item[(iii)] A corollary of Theorem~\ref{thm:onelegged} is that the computations performed in $\nabla \phi$ with more than one hidden layer  cannot be approximated by a $\psi$ network in \emph{finite width.} This is because Theorem~\ref{thm:onelegged} states that $\psi$ with more than one hidden layer is essentially ineffective for approximating $\nabla f$ and we are left with one choice, $\psi$ with one hidden layer. From this perspective, the products of neural networks that emerged from the analysis of $\nabla \phi$ must have potentials for the problem of universal function approximation.
  
 \item[(iv)] What we proved here for the problem of approximating $\nabla f$ with deep neural networks could be a general phenomenon for approximating any other constrained mappings $g:\mathscr{S} \rightarrow \mathscr{S}'$, where perhaps the ``only'' option is to formulate the problem with a ``dual'' neural network, akin to $\phi$. In somewhat general terms, this can be framed as the problem of universal approximation of constrained mappings with implicit parametrization.
 \end{itemize}

{\bf Acknowledgements.} I would like to acknowledge nearly three years of discussions with Aapo Hyv\"arinen. I am also grateful to Francis Bach for discussions and valuable comments on the manuscript.

\bibliography{df.bib}
\bibliographystyle{alpha}

\end{document}